\tikzstyle{every picture}=[
\newtheorem{example}{Example}
\newtheorem{definition}{Definition}
\newtheorem{proposition}{Proposition}
\newtheorem{theorem}{Theorem}
\newtheorem{corollary}{Corollary}
\begin{document}
\begin{frontmatter}
  \title{New Linear-time Algorithm for SubTree Kernel Computation based on Root-Weighted Tree Automata}

  \author[mainaddress]{Ludovic Mignot}

  \author[myaddress]{Faissal Ouardi}
  \ead{Correponding author: f.ouardi@um5r.ac.ma}

  \author[mainaddress]{Djelloul Ziadi}

  \address[mainaddress]{Groupe de Recherche Rouennais en Informatique Fondamentale, Université de Rouen Normandie, 76801 Saint-Étienne du Rouvray Cedex, France}
  \address[myaddress]{ANISSE Research Group, Department of Computer Science, Faculty of Sciences, Mohammed V University in Rabat, Morocco}

  \begin{abstract}
    Tree kernels have been proposed to be used in many areas  as the automatic learning of natural language applications.
    In this paper, we propose a new linear time algorithm based on the concept of weighted tree automata for SubTree kernel computation.
    First, we introduce a new class of weighted tree automata, called Root-Weighted Tree Automata, and their associated formal tree series. Then we define, from this class, the SubTree automata that represent compact computational models for finite tree languages.
    This allows us to design  a theoretically guaranteed linear-time  algorithm  for computing the SubTree  Kernel based on weighted tree automata intersection. The key idea behind the proposed  algorithm is to replace DAG reduction and  nodes sorting  steps used in previous approaches by  states
    equivalence classes computation allowed in the weighted tree automata approach.  Our approach has three major advantages: it is output-sensitive, it is free sensitive from the tree types (ordered trees versus unordered trees), and it is well adapted  to any incremental tree kernel based
    learning methods.
    Finally, we conduct a variety of comparative experiments on a wide range of synthetic tree languages datasets adapted for a deep algorithm analysis. The obtained results show that the proposed
    algorithm outperforms state-of-the-art methods.
  \end{abstract}

  \begin{keyword}
    Kernel methods, kernels for structured data, learning
    in structured domains, tree kernels, weighted tree automata, tree series.
  \end{keyword}
\end{frontmatter}

\section{Introduction}\label{se:int}
Kernel
methods have been widely used to extend the applicability of many well-known algorithms,
such as the Perceptron~\cite{aizerman65}, Support Vector Machines~\cite{cortes95}, or Principal Component Analysis~\cite{zelenkoAR03}.
Tree kernels are interesting approaches in areas of machine learning based natural language processing.
They have been applied to reduce such effort for several natural language tasks, e.g., relation extraction~\cite{icann97a},
syntactic parsing re-ranking~\cite{CollinsDuffy},
named entity recognition\cite{culottaS04,cumbyR03}, Semantic Role Labeling~\cite{moschitti04}, paraphrase detection~\cite{filice-etal-2015-structural} and computational argumentation~\cite{wachsmuth-etal-2017-computational}.

In~\cite{Haussler}, Haussler introduces a framework based on convolution kernel to measure the similarity between structured objects in terms of the similarities of their subparts.
Based on this idea, many convolution kernels for trees are introduced and have been successfully applied to a variety of problems.

The first proposed kernels in the context of tree structured data were the subtree (ST) kernel~\cite{Vishwanathan} and the subset tree (SST) kernel~\cite{CollinsDuffy}. The first defines a feature space consisting of the set of all proper subtrees, while the second extends this set by also considering subset trees.
The research on kernel for trees has evolved either by finding more expressive kernel functions or faster kernels. Expressivity and sparsity has been dealt  by introducing ST-like substructures as features:  the partial tree (PT) kernel~\cite{moschitti2006} and the elastic tree kernel~\cite{Kashima2002KernelsFS}, SubPath~\cite{Kimura}. For more details about kernels for trees, we recommend the thesis of Da San Martino 2009~\cite{dasanmartino}.\\
In this article, we focus on the computation of the ST kernel. The main idea of the ST kernel as introduced in~\cite{Vishwanathan} is to compute the number of common subtrees between two trees \( t_1 \) and \( t_2 \) having respectively \( m \) and \( n \) nodes. It can be recursively computed as follows:
\begin{equation}\label{eq1}
  K(t_1, t_2) = \sum\limits_{ (n_1,n_2)\in N_{t_1}\times N_{t_2}} \Delta(n_1, n_2)
\end{equation}
where \( N_{t_1} \) and \( N_{t_2} \) are the set of nodes respectively in \( t_1 \) and \( t_2 \),
\( \Delta(n_1, n_2) = \sum_{i=1}^{|{\cal S}|}I_i(n_1)\cdot I_i(n_2) \) for some finite set of subtrees \( S = \{s_1, s_2, \cdots \} \), and \( I_i(n) \) is an indicator function which is equal to \( 1 \)
if the subtree  is rooted at node \( n \) and to \( 0 \) otherwise.  Then, a string matching algorithm is used where trees are transformed into strings (see~\cite{Vishwanathan}  for details).
This algorithm has an overall computational complexity equals to \( O(\max(m,n)\log(\max(m,n))) \) which is  the best worst-case time complexity for this problem.

In~\cite{moschitti06}, Moschitti defined an algorithm for the computation of this type of tree kernels which computes the kernels
between two syntactic parse trees in \( O(m \times n) \) time, where \( m \) and \( n \) are the number of nodes in the two trees.
Thus, Moschitti modified the equation~(\ref{eq1}) by introducing a parameter
\( \sigma\in \{0,1\} \) which enables the SubTrees (\( \sigma=1 \)) or the SubSet Trees (\( \sigma=0 \)) evaluation and which is defined for
two trees \( t_1 \) and \( t_2 \) as follows:
if the productions at \( n_{1} \) and \( n_{2} \) are different, then \( \Delta (n_{1},n_{2}) = 0 \); if they are the same and \( n_{1} \) and \( n_{2} \) are
leaves, then \( \Delta (n_{1},n_{2}) = 1 \); finally if the productions at \( n_{1} \) and  \( n_{2} \) are the same, and if \( n_{1} \) and \( n_{2} \) are
not leaves then \( \Delta (n_{1},n_{2}) = \prod_{j=1}^{nc(n_{1})}(\sigma+  \Delta (C_{n_{1}}^{j}, C_{n_{2}}^{j})) \),
where \( nc(n_1) \) is the number of children of \( n_1 \) and \( C_{n}^{j} \) is \( j^{th} \) child of the node \( n \).
This algorithm can be tuned to avoid any evaluation when \( \Delta(n_1, n_2) = 0 \) by efficiently building a  node pair set \( N_p =\{(n_1 , n_2)\in N_{t_1} \times N_{t_2} : p(n_1 ) = p(n_2 )\} \), where \( p(n) \) returns
the production rule associated with \( n \). It requires the sorting of trees productions at a  pre-processing step and then compute \( \Delta \) by a dynamic programming approach. This method has a worst-case time complexity in \( O(m \times n) \) but in practical applications it provides a quite relevant speed-up.

In 2020, Azais and Ingel~\cite{Azais20} develop a unified framework based on  Direct Acyclic Graphs (DAG) reduction for computing the ST kernel from ordered or unordered trees, with or without labels on their vertices. DAG reduction of a tree forest is introduced as compact representation of common subtrees structures that makes possible fast computations of the  subtree kernel. The main advantage of this approach compared to those based on string representations used by Vishwanathan and Smola, in 2002 is that it makes possible fast repeated computations of the ST kernel. Their method allows the implementation of any weighting function, while the recursive computation of the ST kernel proposed by Aiolli {\it et al.}~\cite{4221313} also uses DAG reduction of tree data but makes an extensive use of the exponential form of the weight. They investigate the theoretical complexities of the different steps of the DAG computation 
and prove that it is in
\(O(\max(m,n))\) for ordered trees and in \(O(\max(m,n)\log(\max(m,n))\) for unordered trees (Proposition~7, \cite{Azais20}).

In the following, we propose a new method to compute the ST kernel using weighted tree automata. We begin by defining a new class of weighted tree automata that we call Rooted Weighted Tree Automata (RWTA). This class of weighted tree automata represents a new efficient and optimal  alternative for representing tree forest instead of annotated DAG representation. Then we prove that the ST kernel can be computed efficiently in linear time using a general intersection of RWTA that can be turned into a determinization of a WTA by states accessibility.

The paper is organized as follows: Section~\ref{sec prel} outlines finite tree automata
over ranked and unranked trees and regular tree languages. Next, in Section~\ref{sec rwta}, we define a new
class of weighted tree automata that we call Rooted Weighted Tree Automata.
Thus, in Section~\ref{sec tree seri}, the definitions of SubTree series and automata are obtained. Afterwards, in Section~\ref{sec algo},
we present our algorithms. The first one constructs the RWTA \( A_L \)  as a compact representation of a finite tree language \( L \) in linear time.
The second one computes the RWTA \( A_X\odot A_Y \) representing the Hadamard product of the RWTAs \( A_X \) and \( A_Y \) in time \( O(|SubTree(X)|+|SubTree(Y)|) \). Section~\ref{sec experiments} shows
the efficiency of our method by conducting extensive comparative experiments on a variety of tree languages classes.
Section~\ref{sec conclusion} concludes the paper.

\section{Root-Weighted Tree Automata}\label{sec rwta}

Let \( \Sigma \) be an alphabet.
A \emph{tree} \( t \) over \( \Sigma \) is inductively defined by \( t=f(t_1,\ldots,t_k) \) where \( k \) is any integer, \( f \) is any symbol in \( \Sigma_k \) and \( t_1,\ldots,t_k \) are any \( k \) trees over \( \Sigma \).
We denote by \( T_{\Sigma} \) the set of trees over \( \Sigma \).
A \emph{tree language} over \( \Sigma \) is a subset of \( T_\Sigma \).
We denote by \( |t| \) the size of a tree \( t \), \emph{i.e.}, the number of its nodes.
For any tree language \( L \), we set \( |L|=\sum_{t\in L}|t| \).\\
A \emph{formal tree series} \( \mathbb{P} \) over a set \( S \) is a mapping from \( T_\Sigma \) to \( S \).
Let \( \mathbb{M}=(M,+) \) be a monoid which identity is \( 0 \).
The \emph{support} of \( \mathbb{P} \) is the set \( \mathrm{Support}(\mathbb{P})=\{t\in T_\Sigma\mid \mathbb{P}(t)\neq 0\} \).
Any formal tree series is equivalent to a formal sum 
\( \mathbb{P}=\sum_{t\in T_\Sigma} \mathbb{P}(t)t \).
In this case, the formal sum is considered associative and commutative.

\begin{definition}
  Let \( \mathbb{M}=(M,+) \) be a commutative monoid.
  An \( \mathbb{M} \)-\emph{Root Weighted Tree Automaton} (\( \mathbb{M} \)-RWTA) is a 4-tuple \( (\Sigma,Q,\nu,\delta) \) with:
  \begin{itemize}
    \item \( \Sigma=\bigcup_{k\in\mathbb{N}} \Sigma_k \) an  alphabet,
    \item \( Q \) a finite set of \emph{states},
    \item \( \nu \) a function from \( Q \) to \( M \) called the \emph{root weight function},
    \item \( \delta \) a subset of \( Q\times\Sigma_k\times Q^k \), called the \emph{transition set}.
  \end{itemize}
\end{definition}
\noindent When there is no ambiguity, an \( \mathbb{M} \)-RWTA is called a RWTA\@.

The root weight function \( \nu \) is extended to \( 2^Q \rightarrow M \) for any subset \( S \) of \( Q \) by \( \nu(S)=\sum_{s\in S} \nu(s) \).
The function \( \nu \) is equivalent to the finite subset of \( Q\times M \) defined for any couple \( (q,m) \) in \( Q\times M \) by
\( (q,m)\in\nu \) \( \Leftrightarrow \) \( \nu(q)=m \).

The transition set \( \delta \) is equivalent to the function in
\( \Sigma_k\times Q^k \rightarrow 2^Q \) defined for any symbol \( f \) in \( \Sigma_k \) and
for any \( k \)-tuple \( (q_1,\ldots,q_k) \) in \( Q^k \) by
\begin{equation*}
  q\in\delta(f, q_1, \ldots, q_k) \Leftrightarrow (q, f, q_1, \ldots, q_k)\in\delta.
\end{equation*}

The function \( \delta \) is extended to \( \Sigma_k \times {(2^Q)}^k  \rightarrow 2^Q \)  as follows: for any symbol \( f \) in \( \Sigma_k \), for any
\( k \)-tuple \( (Q_1,\ldots,Q_k) \) of subsets of \( Q \),
\begin{equation*}
  \delta(f, Q_1, \ldots, Q_k)=\bigcup_{(q_1, \ldots, q_k)\in Q_1\times\cdots\times Q_k}
  \delta(f, q_1, \ldots, q_k).
\end{equation*}

Finally, the function \( \Delta \) maps a tree \( t=f(t_1,\ldots,t_k) \)
to a set of states as follows:
\begin{equation*}
  \Delta(t)=\delta(f, \Delta(t_1), \ldots, \Delta(t_k)).
\end{equation*}
When \( t\in \Sigma_0 \) then  \( \Delta(t)=\delta(t)\).

\noindent A \emph{weight} of a tree associated with \( A \) is \( \nu(\Delta(t)) \).
The \emph{formal tree series realized} by \( A \) is the formal tree series over \( M \) denoted by \( \mathbb{P}_A \) and defined by \( \mathbb{P}_A(t)=\nu(\Delta(t)) \), with \( \nu(\emptyset)=0 \) with \( 0 \) the identity of \( \mathbb{M} \).
The \emph{down language} of a state \( q \) in \( Q \) is the set \( L_q=\{t \mid  q\in \Delta(t)\} \).


\begin{example}\label{ex rwta}
  Let us consider the  alphabet \( \Sigma \) defined by \( \Sigma_0=\{a\} \), \( \Sigma_1=\{h\} \) and \( \Sigma_2=\{f\} \).
  Let \( \mathbb{M}=(\mathbb{N},+) \).
  The RWTA \( A=(\Sigma,Q,\nu,\delta) \) defined by
  \begin{align*}
    Q      & = \{1,2,3,4,5\},                      \\
    \nu    & = \{(1,0),(2,3),(3,1),(4,2),(5,4)\},  \\
    \delta & = \{(1,a),(3,a), (2,f,1,3),(4,f,3,3), \\
           & \qquad (5,h,2),(5,h,4),(5,h,5)\},
  \end{align*}
  is represented in Figure~\ref{fig ex RWTA} and realizes the tree series:
  \begin{multline*}
    \mathbb{P}_A=a+5f(a,a)+4h(f(a,a))+4h(h(f(a,a)))+\cdots \\
    +4h(h(\ldots h(f(a,a))\ldots))+\cdots
  \end{multline*}
\end{example}
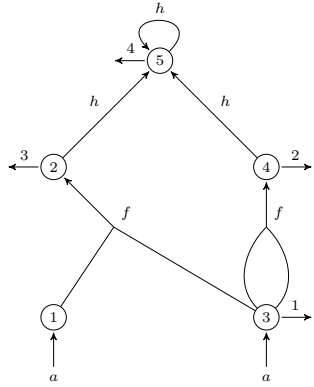
\begin{figure}[ht]
  \centering
  \captionsetup{justification=centering}
  \scalebox{0.8}{
  \begin{tikzpicture}[node distance=2.5cm,bend angle=30,transform shape,scale=1]
    \node[state] (q5) {\( 5 \)};
    \node[state, below left of=q5] (q2) {\( 2 \)};
    \node[state, below right of=q5] (q4) {\( 4 \)};
    \node[state, below of=q2] (q1) {\( 1 \)};
    \node[state, below of=q4] (q3) {\( 3 \)};
    \draw (q5) ++(-0.75cm,0cm) edge[above,<-] node {\( 4 \)} (q5);
    \draw (q2) ++(-0.75cm,0cm) edge[above,<-] node {\( 3 \)} (q2);
    \draw (q4) ++(0.75cm,0cm) edge[above,<-] node {\( 2 \)} (q4);
    \draw (q3) ++(0.75cm,0cm) edge[above,<-] node {\( 1 \)} (q3);
    \draw (q1) ++(0cm,-1cm) node {\( a \)}  edge[->] (q1);
    \draw (q3) ++(0cm,-1cm) node {\( a \)}  edge[->] (q3);
    \path[->]
    (q2) edge[->,above left] node {\( h \)} (q5)
    (q4) edge[->,above right] node {\( h \)} (q5)
    (q5) edge[loop,->,above] node {\( h \)} ()
    ;
    \draw (q1) ++(1cm,1.5cm)  edge[->] node[above right,pos=0] {\( f \)} (q2) edge[shorten >=0pt,] (q1) edge[shorten >=0pt,] (q3);
    \draw (q3) ++(0cm,1.5cm)  edge[->] node[above right,pos=0] {\( f \)} (q4) edge[shorten >=0pt,in=135,out=-135] (q3) edge[shorten >=0pt,in=45,out=-45] (q3);
  \end{tikzpicture}
  }
  \caption{The RWTA \( A \).}%
  \label{fig ex RWTA}
\end{figure}

The class of formal tree series which are realizable by the RWTAs over a monoid \( (M, +) \)
is exactly the class of recognizable step function over any semiring \((M, +, \times)\)~\cite{DrosteG07,DrosteV06}.

\section{RWTA and Tree Series}\label{sec tree seri}
An 
RWTA can be seen as a prefix tree defined in the case of words.
It is a compact structure
which allows us to represent a finite set of trees. Notice that the underlying graph of 
an acyclic
RWTA, called minimal Direct Acyclic Graphs (DAG), has been
  introduced in DAG-based algorithms~\cite{AiolliMSM06,dasanmartino}
  as a compact representation to compute efficiently different tree kernels. In the following, we introduce the Subtree series as well as their Subtree automata.

%

\subsection{Subtree Series and Subtree Automaton}

Let \( \Sigma \) be an alphabet and \( t=f(t_1,\ldots,t_k) \) be a tree in \( T_{\Sigma} \).
The set \( \mathrm{SubTree}(t) \) is the set inductively defined by\\
 $ \mathrm{SubTree}(t)= \{t\}\cup \bigcup_{1\leq j\leq k} \mathrm{SubTree}(t_j)$.\\
For example if \( t=f(h(a),f(h(a),b)) \), then\\
 $ \mathrm{SubTree}(t)=\{a, b, h(a), f(h(a), b), f(h(a), f(h(a), b))\}$.\\

Let \( L \) be a tree language over \( \Sigma \).
The set \( \mathrm{SubTreeSet}(L) \) is the set defined by \( \mathrm{SubTreeSet}(L)=\bigcup_{t\in L}\mathrm{SubTree}(t) \).

The formal tree series \( \mathrm{SubTreeSeries}_t \) is the tree series over \( \mathbb{N} \) inductively defined by \\
 $ \mathrm{SubTreeSeries}_t=t+\sum_{1\leq j \leq k} \mathrm{SubTreeSeries}_{t_j}$.

\begin{example}\label{exp tree}
  Let \( t=f(a,g(a)) \) be a tree.
  The set \( \mathrm{SubTree}_t \) of the tree \( t \) 
  is the set \( \{a, g(a), f(a, g(a)) \} \).
\end{example}


If \( L \) is finite, the rational series \( \mathrm{SubTreeSeries}_L \) is the tree series over \( \mathbb{N} \)
defined by
\begin{equation*}
  \mathrm{SubTreeSeries}_L=\sum_{t\in L} \mathrm{SubTreeSeries}_{t}.
\end{equation*}


\begin{definition}
  Let \( \Sigma \) be an alphabet.
  Let \( L \) be a finite tree language over \( \Sigma \).
  The \emph{SubTree automaton} associated with \( L \) is the RWTA \( \mathrm{STAut}_L=(\Sigma,Q,Q_m,\nu,\delta) \) where:
  \begin{itemize}
    \item \( Q=\mathrm{SubTreeSet}(L) \),
    \item \( Q_m=L \),
    \item \( \forall t\in Q \), \( \nu(t)=\mathrm{SubTreeSeries}_L(t) \),
    \item \( \forall t=f(t_1,\ldots,t_k)\in Q \), \( \delta(f,t_1,\ldots,t_k)=\{t\} \).
  \end{itemize}
\end{definition}

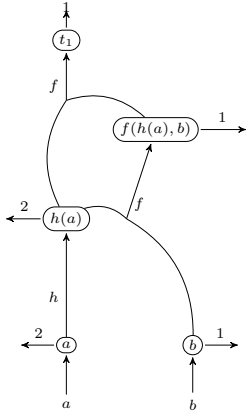
\begin{figure}[H]
\scalebox{0.8}{
  \centerline{
    \begin{tikzpicture}[node distance=2.1cm,bend angle=30,transform shape,scale=1]
      \node[state,rounded rectangle] (a) {\( a \)};
      \node[state, above of=a,rounded rectangle] (h) {\( h(a) \)};
      \node[state, right of=a,rounded rectangle] (b) {\( b \)};
      \node[state, above right of=h,rounded rectangle] (f) {\( f(h(a),b) \)};
      \node[state, above left of=f,rounded rectangle] (t1) {\( t_1 \)};
      \draw (a) ++(-0.75cm,0cm) edge[<-,above] node {\( 2 \)} (a);
      \draw (h) ++(-1cm,0cm) edge[<-,above] node {\( 2 \)} (h);
      \draw (b) ++(0.75cm,0cm) edge[<-,above] node {\( 1 \)} (b);
      \draw (t1) ++(0cm,0.5cm) edge[<-,above] node {\( 1 \)} (t1);
      \draw (f) ++(1.5cm,0cm) edge[<-,above] node {\( 1 \)} (f);
      \draw (a) ++(0cm,-1cm) node {\( a \)}  edge[->] (a);
      \draw (b) ++(0cm,-1cm) node {\( b \)}  edge[->] (b);
      \path[->]
      (a) edge[->,below left] node {\( h \)} (h);
      \draw (h) ++(1cm,0cm)  edge[->] node[above right,pos=0] {\( f \)} (f) edge[shorten >=0pt,above,bend right] (h) edge[shorten >=0pt,above,bend left] (b);
      \draw (t1) ++(0cm,-1cm)  edge[->] node[above left,pos=0] {\( f \)} (t1) edge[shorten >=0pt,above,bend right] (h) edge[shorten >=0pt,above,bend left] (f);
    \end{tikzpicture}
  }
  }
  \caption{The RWTA associated with the tree  \(f(h(a),f(h(a),b)) \).}%
  \label{fig ex At}
\end{figure}

\begin{proposition}\label{propst}
  Let \( \Sigma \) be an alphabet.
  Let \( L \) be a finite tree language over \( \Sigma \).
  Then,

  \begin{equation*}
    \mathbb{P}_{\mathrm{STAut}_L}= \mathrm{SubTreeSeries}_L.
  \end{equation*}
\end{proposition}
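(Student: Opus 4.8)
The plan is to prove the equality of the two tree series pointwise, that is, to show $\mathbb{P}_{\mathrm{STAut}_L}(s) = \mathrm{SubTreeSeries}_L(s)$ for every tree $s \in T_\Sigma$. Since by definition $\mathbb{P}_{\mathrm{STAut}_L}(s) = \nu(\Delta(s))$, the whole argument reduces to understanding the state set $\Delta(s)$ produced by the automaton $\mathrm{STAut}_L$ on an arbitrary input tree $s$.

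The central step is to establish, by structural induction on $s$, the following fact: $\Delta(s) = \{s\}$ whenever $s \in \mathrm{SubTreeSet}(L)$, and $\Delta(s) = \emptyset$ otherwise. In other words, $\mathrm{STAut}_L$ behaves deterministically, each state being reached only by the very subtree it names. For the base case $s = a \in \Sigma_0$, we have $\Delta(a) = \delta(a)$, and the transition set contains the rule $\delta(a) = \{a\}$ precisely when $a \in Q = \mathrm{SubTreeSet}(L)$, which gives the claim directly. For the inductive step $s = f(s_1,\ldots,s_k)$, I would apply the induction hypothesis to each child $s_j$ and split into two cases. If some child $s_j \notin \mathrm{SubTreeSet}(L)$, then $\Delta(s_j) = \emptyset$; since the extended $\delta$ propagates the empty set through any argument, $\Delta(s) = \delta(f, \Delta(s_1), \ldots, \Delta(s_k)) = \emptyset$. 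Moreover $s$ itself cannot lie in $\mathrm{SubTreeSet}(L)$, because that set is closed under the subtree relation, so the claim holds in this case. If instead every child satisfies $\Delta(s_j) = \{s_j\}$, then $\Delta(s) = \delta(f, s_1, \ldots, s_k)$, and by the definition of $\delta$ the only candidate target state is the tree $f(s_1,\ldots,s_k) = s$, the transition existing exactly when $s \in Q$; this yields $\{s\}$ or $\emptyset$ as required.

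Once this fact is in hand, the conclusion is immediate. If $s \in \mathrm{SubTreeSet}(L)$, then $\mathbb{P}_{\mathrm{STAut}_L}(s) = \nu(\{s\}) = \nu(s) = \mathrm{SubTreeSeries}_L(s)$ by the very definition of the root weight function on $\mathrm{STAut}_L$. If $s \notin \mathrm{SubTreeSet}(L)$, then $\mathbb{P}_{\mathrm{STAut}_L}(s) = \nu(\emptyset) = 0$; I would then observe that the support of $\mathrm{SubTreeSeries}_L$ is exactly $\bigcup_{t\in L}\mathrm{SubTree}(t) = \mathrm{SubTreeSet}(L)$, so that $\mathrm{SubTreeSeries}_L(s) = 0$ as well. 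Both cases agree, which proves the proposition.

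I expect the main obstacle to be the careful bookkeeping in the inductive step, specifically justifying the two facts that make the case analysis clean: that the empty set is absorbing for the extended transition function $\delta$, so that a single child with no run kills the run on the whole tree, and that $\mathrm{SubTreeSet}(L)$ is closed under taking subtrees, so that membership of $s$ is equivalent to membership of all its children together with the existence of the corresponding transition. Establishing this downward-closure property, possibly isolated as a small preliminary observation, is the key technical point; the remainder is a routine unfolding of the definitions of $\Delta$, $\delta$, and $\nu$.
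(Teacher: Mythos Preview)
Your proof is correct but follows a different route from the paper's. The paper argues by structural induction on a tree \(t\) generating the automaton: it decomposes \(\mathrm{STAut}_t\) into the root transition together with the sub-automata \(\mathrm{STAut}_{t_i}\), obtains the recurrence \(\mathbb{P}_{\mathrm{STAut}_t}=t+\sum_i \mathbb{P}_{\mathrm{STAut}_{t_i}}\), and matches it against the defining recurrence of \(\mathrm{SubTreeSeries}_t\). In other words, the paper inducts on the \emph{object defining the automaton}, whereas you induct on the \emph{input tree} \(s\) and characterize \(\Delta(s)\) directly. Your approach has the advantage of treating an arbitrary finite language \(L\) uniformly (the paper's proof is written for \(L=\{t\}\) and leaves the passage to general \(L\) implicit) and of making the determinism of \(\mathrm{STAut}_L\) explicit via the lemma \(\Delta(s)\in\{\{s\},\emptyset\}\). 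The paper's approach, on the other hand, mirrors the inductive definition of \(\mathrm{SubTreeSeries}\) more transparently and avoids the auxiliary observations about absorbing \(\emptyset\) and downward closure of \(\mathrm{SubTreeSet}(L)\) that you correctly flag as needed.
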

\begin{proof}
  Let \( t=f(t_1,\ldots,t_n) \), \( \mathrm{STAut}_t=(\Sigma,Q,Q_m,\nu,\delta) \) and \( \mathrm{STAut}_{t_i}=(\Sigma,Q_i,Q_{m_i},\nu_i,\delta_i) \) for \( 1\leq i\leq k \).
  Notice that by definition:
  \( Q=\{t\}\cup \bigcup_{1\leq i\leq k}Q_i \) and \( \delta=\{(t,f,t_1,\ldots,t_k)\}\cup \bigcup_{1\leq i\leq k}\delta_i \).

  \noindent Consequently, \( \mathbb{P}_{\mathrm{STAut}_{t}}=t + \sum_{1\leq i \leq k} \mathbb{P}_{A_{t_i}} \).
  By definition, \( \mathrm{SubTreeSeries}_t=t+\sum_{1\leq j \leq k} \mathrm{SubTreeSeries}_{t_j} \).
  Furthermore, by induction hypothesis, \( \mathbb{P}_{\mathrm{STAut}_{t_i}}= \mathrm{SubTreeSeries}_{t_i} \).
  Therefore, it holds that
  \begin{equation*}
    \mathbb{P}_{\mathrm{STAut}_{L}}=t+\sum_{1\leq j \leq k} \mathrm{SubTreeSeries}_{t_j}=\mathrm{SubTreeSeries}_L.
  \end{equation*}
\end{proof}

\section{Subtree Kernel Computation}\label{sec algo}
In this section, we present algorithms that allow us to compute efficiently tree kernels using the Hadamard product of tree automata.
\subsection{SubTree Automata Construction}

An automaton \( A \) is said to be an \emph{ST automaton} if it is isomorphic to some \( \mathrm{STAut} \).

In this section, we present an incremental algorithm that constructs an ST automaton from a finite set of trees.

By construction, an ST automaton  is homogeneous, i.e., all transitions entering a state \( q \) have the same label. So we can define a function
\( h \) that associates with a state \( q \) its symbol \( h(q) \). For example in Figure~\ref{fig ex RWTA} we have \( h(1)=a \),
\( h(2)=f \) and \( h(4)=f \).
As \( |\delta(f,q_1,\ldots,q_n)|\leq 1, (q,f,q_1,\ldots,q_n)\in\delta \Leftrightarrow \delta(f,q_1,\ldots,q_n)=\{q\} \). So, we define
\( \delta^{-1}(q) \) as \( f(q_1,\ldots,q_n) \).

\begin{example}\label{ex3}
  The transitions of the RWTA recognizing the tree \( f(a,g(a)) \) of Example~\ref{exp tree} are
  \( \{(1,a),(2,g,1),(3,f,1,2)\} \).
  The function \( \delta^{-1} \) is defined by
  \begin{align*}
    \delta^{-1}(1) & =a ,     &
    \delta^{-1}(2) & =g(1),   &
    \delta^{-1}(3) & =f(1,2).
  \end{align*}
\end{example}

The transitions \( \delta \) can be represented by a bideterministic automaton (see Figure~\ref{pref_treef}). Thus, the computation of the image of \( f(q_1,\ldots,q_n) \) by \( \delta \) (i.e. \( \delta(f,q_1,\ldots,q_n) \)) can be done in \( O(n) \) where \( n \) is the rank of \( f \). Furthermore, the function \( \delta^{-1}(q) \) can be computed using this bideterministic automaton in the same time complexity.


Let us consider the two ST automata \(A_X\) and \(A_Y\) associated respectively with the sets \( X \) and \( Y \) defined by
\begin{align*}
  A_{X} & =(\Sigma,Q_X,Q_{m_X},\nu_X,\delta_X,\delta^\bot_X), \\
  A_{Y} & =(\Sigma,Q_Y,Q_{m_Y},\nu_Y,\delta_Y,\delta^\bot_Y).
\end{align*}
To compute the sum of the ST automata \( A_X \) and \( A_Y \),
we define the
partial
function \( \phi \) from \( Q_X \) to \( Q_Y \) which identifies states in \( A_X \) and \( A_Y \) that have the same down language, i.e., for \( p\in Q_X \) and \( p'\in Q_Y \), \( \phi(p)=p' \) \( \Leftrightarrow \) \( L_p=L_{p'} \). Notice that \( \phi \) is a well-defined function, indeed for  an ST automaton, one has for all distinct states \( p \) and \( q \), \( L_p \neq  L_q \).

Algorithm~\ref{algo-union} loops through the
transitions of \( A_X \)
and computes at each step the function \( \phi \)
if possible.
So if the current element \( \alpha \) is \( f(q_1,\ldots, q_n) \) then \( \phi(q_i) \), \( 1\leq i\leq n \) must be defined.
In order to ensure this property,
transitions of \( A_X \)
can be
stored in an ordered list \( \mathrm{OL}_\delta \) as follows:
If \( (q,f,q_1,\ldots,q_n)\in \delta \) then \( \delta^{-1}(q_i)<\delta^{-1}(q) \) for all \( 1\leq i\leq n \).
For example for the transition function of the Example~\ref{ex3}, the ordered list \( \mathrm{OL}_\delta \) is  \( [a,g(1),f(1,2)] \).



\begin{figure}[H]
  \begin{center}
  \scalebox{0.8}{
    \begin{tikzpicture}
      \node [state](A)                    {\( q_a \)};
      \node[state,initial above]         (B) [above right of=A] {\( I \)};
      \node[state]         (C) [below of=B] {\( q_g \)};
      \node[state]         (D) [right of=C] {\( q_f \)};
      \node[state]         (E) [below of=C] {\( q^1_a \)};
      \node[state]         (F) [below of=E] {\( q_{\delta} \)};
      \node[state]         (G) [below of=D] {\( q^1_g \)};
      \node[state]         (H) [below left of=G,node distance=1cm] {\( q^2_a \)};
      \path (B) edge              node {a} (A)
      edge              node {g} (C)
      edge              node {f} (D)
      (C) edge              node {\( 1 \)} (E)
      (A) edge[bend right] node{\( \underline{\mathbf{1}} \)} (F)
      (H) edge              node{\( \underline{\mathbf{3}} \)} (F)
      (E) edge              node {\( \underline{\mathbf{2}} \)} (F)
      (D) edge             node {\( 1 \)} (G)
      (G) edge              node {\( 2 \)} (H)
      ;
    \end{tikzpicture}
}
    \caption{Bideterministic automaton associated to \( \delta \).}%
    \label{pref_treef}
  \end{center}
\end{figure}
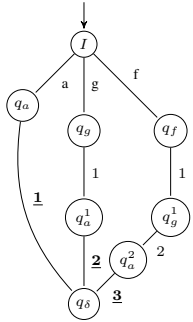
%
%



\begin{algorithm}[ht]
\small{
  \KwIn{ ST Automata \( A_{X} \) and \( A_{Y} \)}
  \KwOut{ST Automaton \( A_{X}+A_{Y} \)}
  \For{\( \alpha\in \Sigma_0 \)}{
    \If{\( \delta_X(\alpha)\text{ and } \delta_Y(\alpha) \) exist}
    {\( \phi(\delta_X(\alpha))\leftarrow \delta_Y(\alpha) \) \;
    }
  }
  \( it \leftarrow \mathrm{OL}_{\delta_X}.\mathrm{getIterator}() \) \;
  \While{\( \mathrm{it.hasNext}() \)}
  {
  \( \alpha \leftarrow \mathrm{it.Next}() \); /*\( \alpha=f(q_1,\ldots,q_n)* \)/

  \If{\( \delta_Y(f(\phi(q_1),\ldots,\phi(q_n))) \) exists}
  {\( \phi(\delta_X(\alpha))\leftarrow \delta_Y(f(\phi(q_1),\ldots,\phi(q_n))) \) \;
    \( \nu_Y(\phi(\delta_X(\alpha)))\leftarrow\nu_Y(\delta_Y(\alpha))+ \nu_X(\delta_X(\alpha)) \) \;
  }
  \Else{
  Add  a new state \( q' \) to \( Q_Y \) \;
  \If{\( \delta_Y(f(\phi(q_1),\ldots,\phi(q_n)))\in Q_{m_X} \)} {{Add \( q' \) to \( Q_{m_Y} \) \;}}
  \( \mathrm{OL}_{\delta_Y} . \mathrm{Add}(f(\phi(q_1),\ldots,\phi(q_n)),q') \) \;
  \( \delta_Y . \mathrm{Add}(f(\phi(q_1),\ldots,\phi(q_n)),q') \) \;
  \( \nu_Y(q')\leftarrow\nu_X(\delta_{X}(\alpha)) \) \;
  \( \phi(\delta_{X}(\alpha))\leftarrow q' \) \;
  }

  }
  }
  \caption{Union of \( A_X \) and \( A_Y \)}%
  \label{algo-union}
\end{algorithm}
Algorithm~\ref{algo-union} is performed in \( O(|\delta_X|) \) time complexity.

\begin{proposition}
  Let \( X \) and \( Y \) be two finite sets of trees and let \( A_X \) and \( A_Y \) the corresponding ST automata. Then, the ST  automaton \( A_X+A_Y \) can be computed in \( O(\min(|A_X|,|A_Y|)) \) time complexity.
\end{proposition}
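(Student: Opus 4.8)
The plan is to prove the bound in two stages: first confirm that Algorithm~\ref{algo-union} computes \( A_X+A_Y \) in time \( O(|\delta_X|) \), which is the bound already asserted immediately above the statement, and then exploit the commutativity of the sum to run it on the \emph{smaller} of the two automata so that \( |\delta_X| \) is replaced by \( \min(|A_X|,|A_Y|) \). Note first that \( O(|\delta_X|)=O(|A_X|) \): in an ST automaton every state \( q=f(t_1,\ldots,t_k) \) is defined by exactly one transition \( \delta(f,t_1,\ldots,t_k)=\{q\} \), so \( |\delta_X|=\Theta(|A_X|) \), and likewise for \( Y \).

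For correctness I would argue that the output of Algorithm~\ref{algo-union} is an ST automaton realizing \( \mathrm{SubTreeSeries}_X+\mathrm{SubTreeSeries}_Y \). Processing \( \mathrm{OL}_{\delta_X} \) in the prescribed order guarantees that when a transition \( \alpha=f(q_1,\ldots,q_n) \) of \( A_X \) is handled, each \( \phi(q_i) \) has already been set, so \( \phi \) is the well-defined partial function identifying states with equal down language discussed before the algorithm. When the matching state already exists in \( A_Y \), the two root weights are added; otherwise a fresh state carrying the weight inherited from \( A_X \) is created and inserted. By Proposition~\ref{propst} and the inductive definition of \( \mathrm{SubTreeSeries} \), the realized series is exactly the sum, and since the ST automaton of a finite series is unique up to isomorphism, the result is \( A_X+A_Y \).

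For the time analysis, the dominant cost is the \textbf{while} loop over \( \mathrm{OL}_{\delta_X} \), with one iteration per transition of \( A_X \). Each iteration performs \( n \) lookups \( \phi(q_i) \) (constant time each if \( \phi \) is stored as an array indexed by the states of \( A_X \)); one membership test \( \delta_Y(f(\phi(q_1),\ldots,\phi(q_n))) \), which by the bideterministic automaton of Figure~\ref{pref_treef} costs \( O(n) \) where \( n \) is the rank of \( f \); and a constant number of \( O(1) \) updates to \( \phi \), \( \nu_Y \), \( Q_Y \), \( Q_{m_Y} \) together with insertions into \( \mathrm{OL}_{\delta_Y} \) and \( \delta_Y \). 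Summing \( O(n) \) over all transitions gives \( O(|\delta_X|) \). The most delicate point, and the step I expect to be the main obstacle, is verifying that growing \( A_Y \) on the fly keeps each insertion into \( \delta_Y \) (and into its bideterministic representation) amortized \( O(n) \) while preserving the \( O(n) \) query cost; this amortized bookkeeping is exactly what makes the per-iteration cost \( O(n) \) and hence the total \( O(|\delta_X|) \).

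Finally, to reach \( O(\min(|A_X|,|A_Y|)) \) I would invoke commutativity of the sum: since \( \mathrm{SubTreeSeries}_X+\mathrm{SubTreeSeries}_Y=\mathrm{SubTreeSeries}_Y+\mathrm{SubTreeSeries}_X \) and the ST automaton of a series is unique up to isomorphism, \( A_X+A_Y \) and \( A_Y+A_X \) coincide. Hence one may run Algorithm~\ref{algo-union} with the roles of \( X \) and \( Y \) exchanged, looping over the transitions of whichever automaton has fewer transitions. Comparing \( |A_X| \) and \( |A_Y| \) in \( O(1) \) time (the sizes being maintained as metadata of each automaton) and then applying the \( O(|\delta_\bullet|) \) bound on the chosen side yields the claimed \( O(\min(|A_X|,|A_Y|)) \). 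The only subtlety to double-check here is that the in-place construction genuinely produces the same automaton regardless of which argument is scanned, which is precisely the uniqueness-plus-commutativity argument above.
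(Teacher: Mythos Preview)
Your proposal is correct and follows essentially the same approach the paper takes: the paper does not give a separate proof of this proposition but simply asserts, immediately before it, that Algorithm~\ref{algo-union} runs in \(O(|\delta_X|)\), leaving the passage to \(\min(|A_X|,|A_Y|)\) implicit. Your argument makes this implicit step explicit via commutativity of the sum and the in-place nature of the construction, which is exactly the intended reasoning; your additional remarks on correctness and on the amortized cost of updating the bideterministic representation of \(\delta_Y\) only flesh out details the paper omits.
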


Notice that an ST automaton associated with a tree \( t=f(t_1,\ldots, t_n) \) can be computed using Algorithm~\ref{algo-union}. Indeed,
first we construct the automaton associated with the set \( \{t_1,\ldots, t_n\} \). Next if \( t_i \) is recognized at the state
\( q_{k_i} \) for \( 1\leq i\leq n \), we add, a new state \( q \), a new  transition \( (q,f,q_{k_1},\ldots, q_{k_n} ) \) and set \( Q_m=\{q\} \) and \( \nu(q)=1 \).

\begin{theorem}\label{cor tps cons al}
  Let \( \Sigma \) be an alphabet.
  Let \( L \) be a finite tree language over \( \Sigma \).
  Then, the ST automaton associated with \( L \) can be computed in \( O(|L|) \) time.

\end{theorem}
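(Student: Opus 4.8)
The plan is to realize \( \mathrm{STAut}_L \) as an iterated sum of single-tree automata and to charge the cost of each insertion to the tree being inserted. Write \( L=\{t_1,\ldots,t_p\} \). Since \( \mathrm{SubTreeSeries}_L=\sum_{i=1}^p\mathrm{SubTreeSeries}_{t_i} \) by definition, Proposition~\ref{propst} guarantees that \( \mathbb{P}_{\mathrm{STAut}_L}=\sum_i \mathbb{P}_{\mathrm{STAut}_{t_i}} \); hence it suffices to start from the empty automaton and to accumulate \( A\leftarrow A+\mathrm{STAut}_{t_i} \) for \( i=1,\ldots,p \) using Algorithm~\ref{algo-union}, whose correctness (in particular the additive update of \( \nu \)) returns exactly \( \mathrm{STAut}_L \) on termination. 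The whole argument then reduces to two complexity claims: building each \( \mathrm{STAut}_{t_i} \) costs \( O(|t_i|) \), and inserting it into the accumulator also costs \( O(|t_i|) \).

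First I would prove the single-tree bound. Given \( t_i \), I would traverse it in post-order, so that when a node \( v \) carrying a symbol \( f\in\Sigma_n \) is visited, the states \( q_1,\ldots,q_n \) representing the subtrees rooted at its children are already available. Deciding whether the subtree rooted at \( v \) already owns a state amounts to computing \( \delta(f,q_1,\ldots,q_n) \); by the remark accompanying Figure~\ref{pref_treef}, this lookup — as well as the creation of a fresh transition when the value is undefined — is performed on the bideterministic automaton in \( O(n) \) time, i.e.\ in time proportional to the arity of \( f \). Summing over all nodes, the cost is \( \sum_{v}O(\mathrm{rank}(v))=O(\#\text{edges})=O(|t_i|) \), and the number of states created is at most \( |\mathrm{SubTree}(t_i)|\le |t_i| \), so \( |\mathrm{STAut}_{t_i}|=O(|t_i|) \).

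For the accumulation step I would invoke the preceding proposition on the union, which bounds the cost of a union by \( O(\min(|A_X|,|A_Y|)) \). Feeding \( \mathrm{STAut}_{t_i} \) as the automaton whose transitions are iterated over (the role of \( A_X \) in Algorithm~\ref{algo-union}) makes each insertion run in \( O(|\delta_{\mathrm{STAut}_{t_i}}|)=O(|t_i|) \), independently of how large the accumulator \( A \) has already grown; each transition of \( \mathrm{STAut}_{t_i} \) triggers only a constant number of \( \delta_Y \)- and \( \phi \)-lookups, each again of cost proportional to the arity via the bideterministic automaton of \( A \). Adding the two contributions per tree and summing over \( L \) gives \( \sum_{i=1}^p O(|t_i|)=O\!\big(\sum_i |t_i|\big)=O(|L|) \), which is the claimed bound.

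The step I expect to be delicate is precisely the insistence that each insertion be charged to the inserted tree rather than to the accumulator: a careless union iterating over the larger operand would make the \( i \)-th step cost \( O(|A|) \) and inflate the total to a quadratic \( O(|L|^2) \). The \( O(\min(\cdot,\cdot)) \) bound of the union proposition is what must be leveraged, and I would make explicit that every elementary \( \delta \) and \( \delta^{-1} \) query used inside Algorithm~\ref{algo-union} stays proportional to the symbol arity, so that the per-transition work is genuinely bounded against the edge count. The remaining verifications — that \( \phi \) is well defined (distinct states of an ST automaton have distinct down languages) and that the weights add correctly — are routine and follow from the definition of \( \mathrm{STAut} \) together with Proposition~\ref{propst}.
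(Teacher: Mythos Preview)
Your proposal is correct and follows essentially the same route as the paper: build each \( \mathrm{STAut}_{t_i} \) in \( O(|t_i|) \) and then accumulate via Algorithm~\ref{algo-union}, charging each union to the inserted tree through the \( O(\min(|A_X|,|A_Y|)) \) bound. Your argument is in fact more detailed than the paper's own justification, which merely records the recursive single-tree construction (the paragraph preceding the theorem) and leaves the summation over \( L \) implicit; your explicit post-order description and your insistence on iterating over the smaller operand make the \( O(|L|) \) bound transparent.
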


\subsection{Hadamard Product Computation}

\begin{definition}
  Let \( X \) and \( Y \) be two finite tree languages. We define:
  \( \mathrm{SubTreeKernel}(X,Y)=\displaystyle\sum_{t\in T_{\Sigma}}\mathrm{SubTreeSeries}_X(t)\odot \mathrm{SubTreeSeries}_Y(t) \), where \( \odot \) is the Hadamard product.
\end{definition}

\begin{example}\label{ex subset tree}
  Let \( \Sigma \) be the  alphabet defined by
  \begin{align*}
    \Sigma_0 & = \{a,b\}, &
    \Sigma_1 & = \{h\},   &
    \Sigma_2 & = \{f\}.
  \end{align*}
  Let us consider the trees $t_1  = f(h(a),f(h(a),b))$,\\ $t_2 = f(h(a),h(b))$ and $t_3  = f(f(b,h(b)),f(h(a),h(b)))$. \\
  Then it can be shown that:
  \begin{gather*}
    \begin{aligned}
      \mathrm{SubTreeSeries}_{t_1} & = t_1+f(h(a),b)+2h(a)+2a+b \\
      \mathrm{SubTreeSeries}_{t_2} & = t_2+h(b)+h(a)+a+b        \\
      \mathrm{SubTreeSeries}_{t_3} & = t_3+f(b,h(b))+t_2        \\
                                  & \quad +2h(b)+h(a)+3b+a
    \end{aligned}\\
    \begin{aligned}
      \mathrm{SubTreeSeries}_{\{t_1,t_2\}} & = t_1+t_2+f(h(a),b)     \\
                                           & \quad +3h(a)+h(b)+3a+2b  \\
  \mathrm{SubTreeSeries}_{t_3} & = t_3+f(b,h(b))+t_2        \\
                                  & \quad +2h(b)+h(a)+3b+a 
    \end{aligned}\\
    \begin{aligned}
      \mathrm{SubTreeSeries}_{\{t_1,t_2\}} & \odot \mathrm{SubTreeSeries}_{t_3} \\
                                           & = t_2+2h(b)+3h(a)+6b+3a
    \end{aligned}\\
    \begin{aligned}
      \mathrm{SubTreeKernel}(\{t_1,t_2\},\{t_3\}) & = 15
    \end{aligned}
  \end{gather*}
\end{example}

The following proposition shows how \( \mathrm{SubTreeKernel}(X,Y) \)
can be computed from the
automata \( A_X \) and \( A_Y \).

\begin{proposition}
  Let \( A_X=(\Sigma,Q_X,Q_{m_X},\nu_X,\delta_X\uplus\delta^\bot_X) \) and \( A_Y=(\Sigma,Q_Y,Q_{m_Y},\nu_Y,\delta_Y\uplus\delta^{\bot}_Y) \) be two RWTAs.
  The  RWTA \( A_X\odot A_Y=(\Sigma,Q_{X\odot Y}\cup \{\bot \},Q_{m_{X\odot Y}},\nu_{X\odot Y},\delta_{X\odot Y}\uplus \delta^{\bot}_{X\odot Y}) \)
  where:
  \begin{itemize}
    \item \( Q_{X\odot Y}=Q_X\times Q_Y \)
    \item 
          \( \delta_{X\odot Y}= \hspace{-1cm} \bigcup\limits_{\substack{
              f\in \Sigma_k,\\
              (p,f,p_1,\ldots,p_k)\in \delta_X,\\
              (q,f,(q_1,\ldots,q_k))\in\delta_Y
            }} \hspace{-1cm} \{((p,q),f,(p_1,q_1),\ldots,(p_k,q_k))\}  \)
    \item
          \( \delta^{\bot}_{X\odot Y}=
          \{(\bot,\bot)\} \cup \hspace{-1cm}
          \bigcup\limits_{\substack{
              f\in \Sigma_k,\\
              (p,f,p_1,\ldots,p_k)\in \delta_X,\\
              (q,f,(q_1,\ldots,q_k))\in\delta_Y
            }}
          \hspace{-1cm}
          \{((p,q),f,\bot,\ldots,\bot)\}
          \)
    \item \( \forall (q_1,q_2)\in Q_{X\odot Y} \), \( \nu_{X\odot Y}((q_1,q_2))=\nu_X(q_1)\times\nu_Y(q_2) \).
    \item \( (p,q)\in Q_{m_{X\odot Y}} \mbox{ if } p\in Q_{m_X}\land q\in Q_{m_Y} \),
  \end{itemize}
  realizes the tree series \( \mathbb{P}_{A_X}\odot \mathbb{P}_{A_Y} \).
\end{proposition}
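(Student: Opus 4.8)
The plan is to establish the pointwise identity \( \mathbb{P}_{A_X\odot A_Y}(t) = \mathbb{P}_{A_X}(t)\times\mathbb{P}_{A_Y}(t) \) for every tree \( t\in T_\Sigma \), since by definition of the series realized by a RWTA this is exactly what it means for \( A_X\odot A_Y \) to realize \( \mathbb{P}_{A_X}\odot\mathbb{P}_{A_Y} \). Unfolding \( \mathbb{P}_A(t)=\nu(\Delta(t)) \), the goal reduces to
\begin{equation*}
  \nu_{X\odot Y}(\Delta_{X\odot Y}(t)) = \nu_X(\Delta_X(t))\times\nu_Y(\Delta_Y(t)).
\end{equation*}
Because the Hadamard product uses the multiplication of the weight structure, I would work over a semiring \( (M,+,\times) \) (the setting of the recognizable step functions recalled after Example~\ref{ex rwta}) and rely on distributivity of \( \times \) over \( + \) together with \( m\times 0 = 0 \).

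The heart of the argument is a structural lemma, proved by induction on \( t=f(t_1,\ldots,t_k) \), that relates the run of the product to the runs of its factors:
\begin{equation*}
  \Delta_{X\odot Y}(t)\cap (Q_X\times Q_Y) = \Delta_X(t)\times\Delta_Y(t).
\end{equation*}
For the base case \( t\in\Sigma_0 \), the leaf clause of \( \delta_{X\odot Y} \) produces \( (p,q) \) precisely when \( (p,t)\in\delta_X \) and \( (q,t)\in\delta_Y \), which is the required factorization. For the inductive step I would expand \( \Delta_{X\odot Y}(t)=\delta_{X\odot Y}(f,\Delta_{X\odot Y}(t_1),\ldots,\Delta_{X\odot Y}(t_k)) \); the induction hypothesis rewrites each \( \Delta_{X\odot Y}(t_j) \) as \( \Delta_X(t_j)\times\Delta_Y(t_j) \) (together, possibly, with the dead state), and the defining form of \( \delta_{X\odot Y} \) — a product transition \( ((p,q),f,(p_1,q_1),\ldots,(p_k,q_k)) \) exists exactly when both \( (p,f,p_1,\ldots,p_k)\in\delta_X \) and \( (q,f,q_1,\ldots,q_k)\in\delta_Y \) — shows that a pair \( (p,q) \) is generated iff \( p\in\delta_X(f,\Delta_X(t_1),\ldots,\Delta_X(t_k)) \) and \( q\in\delta_Y(f,\Delta_Y(t_1),\ldots,\Delta_Y(t_k)) \), i.e. iff \( p\in\Delta_X(t) \) and \( q\in\Delta_Y(t) \).

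Once the reachable state sets factor, the conclusion follows from the defining multiplicativity \( \nu_{X\odot Y}((p,q))=\nu_X(p)\times\nu_Y(q) \), the additive extension \( \nu(S)=\sum_{s\in S}\nu(s) \), and distributivity:
\begin{equation*}
  \nu_{X\odot Y}(\Delta_X(t)\times\Delta_Y(t)) = \sum_{p\in\Delta_X(t)}\sum_{q\in\Delta_Y(t)}\nu_X(p)\times\nu_Y(q) = \Big(\sum_{p\in\Delta_X(t)}\nu_X(p)\Big)\times\Big(\sum_{q\in\Delta_Y(t)}\nu_Y(q)\Big),
\end{equation*}
which is exactly \( \nu_X(\Delta_X(t))\times\nu_Y(\Delta_Y(t)) \), as required.

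The step I expect to be the main obstacle is the bookkeeping around the dead state \( \bot \) and the bottom transition set \( \delta^\bot \). Here I must check that completing each factor with \( \bot \) makes the bottom-up run total and that \( \nu(\bot)=0 \), so that whenever \( t \) fails to be a common subtree the missing genuine transitions are exactly absorbed by the \( \delta^\bot \)-clauses into \( (\bot,\ldots,\bot) \), which carry weight \( 0 \) and therefore do not contaminate the sum. Concretely, I would verify that the bottom transitions never manufacture a spurious non-\( \bot \) pair in \( \Delta_{X\odot Y}(t) \) and that the global dead state \( (\bot,\bot) \) absorbs every failing run, so that the intersection with \( Q_X\times Q_Y \) appearing in the lemma is the only part carrying nonzero weight. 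Matching the clauses of \( \delta^\bot_{X\odot Y} \) against the partial definedness of \( \delta_X \) and \( \delta_Y \) is where the reasoning is least automatic and will require the most care.
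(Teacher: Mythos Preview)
The paper states this proposition without proof: it is asserted and immediately followed by a corollary, with no argument supplied. So there is no ``paper's own proof'' to compare against.

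Your plan is the standard and correct one for a product-automaton statement of this kind: prove by structural induction on \(t\) that the reachable set in the product factors as \(\Delta_X(t)\times\Delta_Y(t)\) (modulo the sink), then invoke the multiplicative definition of \(\nu_{X\odot Y}\) together with distributivity in the semiring to split the double sum. The base case and inductive step you sketch go through exactly as you describe, since a product transition exists iff both factor transitions exist.

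Your instinct that the \(\bot\)-bookkeeping is the delicate point is well founded, and in fact deserves more scrutiny than you may expect. As written in the paper, \(\delta^{\bot}_{X\odot Y}\) contains tuples \(((p,q),f,\bot,\ldots,\bot)\) whenever \emph{some} \(f\)-transitions exist at \(p\) in \(A_X\) and at \(q\) in \(A_Y\); read literally, this would let all-\(\bot\) children produce a genuine pair state \((p,q)\), contradicting the factorization lemma you want. The proposition only makes sense under the intended reading that \(\bot\) is an absorbing dead state with \(\nu(\bot)=0\) and that the \(\delta^\bot\) clauses are the completion sending every otherwise-undefined configuration to \(\bot\) (in particular, never \emph{out} of \(\bot\) into \(Q_X\times Q_Y\)). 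When you carry out the verification you outline, you should make that reading explicit and argue from it; otherwise the literal clause set in the statement would falsify your key lemma. In the paper's actual application both factors are deterministic ST automata, so the only role of \(\bot\) is to absorb trees outside \(\mathrm{SubTreeSet}(X)\cap\mathrm{SubTreeSet}(Y)\), and under that interpretation your argument is complete.
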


\begin{corollary} Let \( A_X \) and \( A_Y \) be two
  \( \alpha \) automata. Then,
  \begin{equation*}
    \mathrm{SubTreeKernel}(X, Y)=\displaystyle\sum_{t\in T_{\Sigma}}\mathbb{P}_{A_X\odot A_Y}(t).
  \end{equation*}
\end{corollary}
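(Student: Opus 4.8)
The plan is to chain together the three facts already established in the excerpt and then unfold the Hadamard product at the level of coefficients, so the argument is essentially bookkeeping rather than a genuine computation. First I would read ``$\alpha$ automata'' as ST automata, so that $A_X=\mathrm{STAut}_X$ and $A_Y=\mathrm{STAut}_Y$ are the ST automata associated with the finite tree languages $X$ and $Y$. By Proposition~\ref{propst} this immediately gives $\mathbb{P}_{A_X}=\mathrm{SubTreeSeries}_X$ and $\mathbb{P}_{A_Y}=\mathrm{SubTreeSeries}_Y$.

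Next I would invoke the preceding proposition, which asserts that $A_X\odot A_Y$ realizes the Hadamard product of the realized series, i.e. $\mathbb{P}_{A_X\odot A_Y}=\mathbb{P}_{A_X}\odot\mathbb{P}_{A_Y}$. Combining this with the two identities above yields $\mathbb{P}_{A_X\odot A_Y}=\mathrm{SubTreeSeries}_X\odot\mathrm{SubTreeSeries}_Y$. The key step is then to evaluate this series at an arbitrary tree $t$: by definition of the Hadamard product, $\mathbb{P}_{A_X\odot A_Y}(t)=\mathbb{P}_{A_X}(t)\times\mathbb{P}_{A_Y}(t)=\mathrm{SubTreeSeries}_X(t)\times\mathrm{SubTreeSeries}_Y(t)$. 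Summing over all $t\in T_\Sigma$ and recognizing the right-hand side as the very definition of $\mathrm{SubTreeKernel}(X,Y)$ closes the argument.

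The only point requiring a little care, rather than a true obstacle, is the well-definedness of the sum over the infinite set $T_\Sigma$. I would note that since $X$ and $Y$ are finite, the supports of $\mathrm{SubTreeSeries}_X$ and $\mathrm{SubTreeSeries}_Y$ are contained in the finite sets $\mathrm{SubTreeSet}(X)$ and $\mathrm{SubTreeSet}(Y)$, so the support of their Hadamard product lies in $\mathrm{SubTreeSet}(X)\cap\mathrm{SubTreeSet}(Y)$, which is finite; hence only finitely many summands are nonzero and the sum is meaningful. I would also confirm, on the coefficient side, that the scalar ``$\odot$'' appearing in the definition of $\mathrm{SubTreeKernel}$ is indeed ordinary multiplication in $\mathbb{N}$, matching the pointwise product coming from the series-level Hadamard product (this is what the worked Example~\ref{ex subset tree} already illustrates numerically). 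With these two sanity checks in place, the corollary follows directly from Proposition~\ref{propst} and the Hadamard-product proposition, and I expect no hard step.
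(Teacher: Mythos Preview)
Your proposal is correct and matches the paper's treatment: the corollary is stated without proof there, precisely because it is an immediate consequence of Proposition~\ref{propst} together with the preceding Hadamard-product proposition, chained exactly as you describe. Your remarks on finiteness of the support and on reading the scalar $\odot$ as ordinary multiplication in $\mathbb{N}$ are the appropriate sanity checks and do not add any step beyond what the paper implicitly relies on.
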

For an efficient computation,
we must compute just
the accessible part of the automaton  \( A_X\odot A_Y \).
The size of this accessible part is equal to 
\( |\mathrm{SubTreeSet}(X)\cap \mathrm{SubTreeSet}(Y)| \).
Notice that \( |\mathrm{SubTreeSet}(X)\cap \mathrm{SubTreeSet}(Y)| \leq |A_X|+|A_Y| \).

The following algorithm (Algorithm~\ref{algo-Intersection2}) computes the automaton \( A_X \odot A_Y \)
for two ST automata.
\begin{algorithm}[ht]
\small{
  \KwIn{ ST Automata \( A_{X} \) and \( A_{Y} \)}
  \KwOut{\( A_X\odot A_Y \)}
  \For{\( \alpha\in \Sigma_0 \)}{
    \If{\( (\delta_X(\alpha)\text{ and } \delta_Y(\alpha)) \) exist}
    {\( \phi(\delta_X(\alpha))\leftarrow \delta_Y(\alpha) \) \;
    }
  }
  \( it\leftarrow \mathrm{OL}_{\delta_X}. \mathrm{getIterator}() \) \;
  \While{\( it.\mathrm{hasNext}() \)}
  {
  \( \alpha \leftarrow it.\mathrm{Next}() \) \;
  \If{\( f(\phi(q_1),\ldots,\phi(q_n)) \) exist}{
    Let \( \alpha=f(q_1,\ldots,q_n) \) \;
    Let \( p=\delta_{Y}(f(\phi(q_1),\ldots,\phi(q_n))) \) \;
    \( \phi(\delta_{X}(\alpha))\leftarrow p \) \;
    \( \nu_X(\delta_{X}(\alpha))=\nu_X(\delta_{X}(\alpha))\times\nu_Y(p) \) \;
    }
    \Else{
      \( \delta_X.\mathrm{Remove}(\alpha,\delta_X(\alpha)) \) \;
      \( it.\mathrm{Remove}(\alpha) \) \;
      \( \phi(\delta_{X}(\alpha))=\bot \) \;
    }
  }
  }
  \caption{The Subtree Automaton \( A_X\odot A_Y \).}%
  \label{algo-Intersection2}
\end{algorithm}

\begin{proposition}
  Let \( A_X \) and \( A_Y \) be two ST automata associated respectively with the sets of trees \( X \) and \( Y \).
  The automaton \( A_{X}\odot A_{Y} \) can be computed in time\\
   $ O(\min(|\mathrm{SubTreeSet}(X)|, |\mathrm{SubTreeSet}(Y)|))$.
\end{proposition}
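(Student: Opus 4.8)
The plan is to charge the running time of Algorithm~\ref{algo-Intersection2} transition by transition against the structure of the automaton being traversed. First I would recall that, since $A_X$ is an ST automaton, its state set is exactly $\mathrm{SubTreeSet}(X)$ and every state $t=f(t_1,\ldots,t_n)$ carries a single defining transition $(t,f,t_1,\ldots,t_n)$; the map sending a transition to its target is a bijection, so $|\delta_X|=|Q_X|$, and the main \textbf{while} loop, which iterates over $\mathrm{OL}_{\delta_X}$, makes exactly one pass over these transitions. The ordering of $\mathrm{OL}_{\delta_X}$ guarantees that when a transition $\alpha=f(q_1,\ldots,q_n)$ is processed, each child $q_i$ has already been visited, so $\phi(q_i)$ is defined and the test on $\delta_Y(f(\phi(q_1),\ldots,\phi(q_n)))$ is meaningful; this is the structural prerequisite that makes the single-pass accounting valid.

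Next I would bound the cost of one iteration. The only non-constant operation is the evaluation of $\delta_Y(f(\phi(q_1),\ldots,\phi(q_n)))$; using the bideterministic representation of $\delta_Y$ (Figure~\ref{pref_treef}) this evaluation, together with the symmetric removals from $\delta_X$ and from the iterator, runs in $O(n)$, where $n$ is the rank of $f$. All remaining steps, namely the assignment to $\phi$, the product update $\nu_X(\delta_X(\alpha))\times\nu_Y(p)$, and the membership tests, take constant time under the underlying hash-based data structures. Thus the iteration handling the transition of $t$ costs $O(\mathrm{arity}(t))$.

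Summing over all transitions gives $O\bigl(\sum_{t\in\mathrm{SubTreeSet}(X)}\mathrm{arity}(t)\bigr)$ for the while loop, plus $O(\ell)$ for the initial leaf loop, where $\ell$ is the number of nullary symbols occurring in $X$; since each such symbol is itself an element of $\mathrm{SubTreeSet}(X)$, the latter is also $O(|\mathrm{SubTreeSet}(X)|)$. Because the root of any tree $t$ has strictly fewer children than $t$ has nodes, $\mathrm{arity}(t)\le|t|-1<|t|$, whence
\begin{equation*}
  \sum_{t\in\mathrm{SubTreeSet}(X)}\mathrm{arity}(t)\ \le\ \sum_{t\in\mathrm{SubTreeSet}(X)}|t|\ =\ |\mathrm{SubTreeSet}(X)|,
\end{equation*}
using the convention $|L|=\sum_{t\in L}|t|$. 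Hence the algorithm runs in $O(|\mathrm{SubTreeSet}(X)|)$. Running the symmetric variant that iterates over $\mathrm{OL}_{\delta_Y}$ yields $O(|\mathrm{SubTreeSet}(Y)|)$, and selecting the automaton with the smaller subtree set gives the claimed $O(\min(|\mathrm{SubTreeSet}(X)|,|\mathrm{SubTreeSet}(Y)|))$.

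The step I expect to be the crux is justifying the uniform $O(n)$ cost per query, that is, that the whole sequence of evaluations $\delta_Y(f(\phi(q_1),\ldots,\phi(q_n)))$ can be answered through the bideterministic automaton without hidden logarithmic factors from dictionary lookups. This is exactly where the amortized charging against $\sum_t\mathrm{arity}(t)$ must be made rigorous, and where the equivalence between the transition set and its bideterministic representation does the real work; everything else reduces to the bijection $|\delta_X|=|Q_X|$ and the elementary inequality $\mathrm{arity}(t)<|t|$.
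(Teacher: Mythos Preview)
Your argument is correct and is precisely the analysis the paper relies on, even though the paper itself states this proposition without proof. You correctly exploit the bijection between states and transitions in an ST automaton, the bottom-up ordering of $\mathrm{OL}_{\delta_X}$, and the $O(n)$ lookup via the bideterministic representation of $\delta_Y$ that the paper establishes just before Figure~\ref{pref_treef}; the symmetry argument for the $\min$ is the right way to finish. One remark: your final inequality $\sum_{t}\mathrm{arity}(t)\le\sum_{t}|t|=|\mathrm{SubTreeSet}(X)|$ is valid but quite slack---the left-hand side is really the edge count of $A_X$, hence $O(|A_X|)$, which is already what the paper uses elsewhere (e.g.\ in the bound $|A_X|\le|X|$)---but since the proposition is phrased in terms of $|\mathrm{SubTreeSet}(X)|$ your estimate is exactly what is needed.
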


\subsection{Kernel Computation}

\begin{proposition}
  Let \( X \) and \( Y \) be two finite tree languages. Let \( Z \) be the accessible part of the ST automaton \( A_X\odot A_Y \). Then,

  $ \mathrm{SubTreeKernel}(X, Y)=\sum_{q\in Z} \nu(q)$.
\end{proposition}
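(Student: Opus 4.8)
The plan is to reduce the kernel, which is written as a sum over \emph{all} trees in $T_\Sigma$, to the finite sum of root weights over the accessible product states, relying on the preceding corollary that already expresses $\mathrm{SubTreeKernel}(X,Y)$ through the Hadamard product automaton. First I would invoke that corollary to write $\mathrm{SubTreeKernel}(X,Y)=\sum_{t\in T_\Sigma}\mathbb{P}_{A_X\odot A_Y}(t)$, and then use the fact (established in the proposition preceding the corollary) that $\mathbb{P}_{A_X\odot A_Y}=\mathbb{P}_{A_X}\odot\mathbb{P}_{A_Y}$. By Proposition~\ref{propst}, $\mathbb{P}_{A_X}=\mathrm{SubTreeSeries}_X$ and $\mathbb{P}_{A_Y}=\mathrm{SubTreeSeries}_Y$, so the coefficient of $t$ is exactly $\mathrm{SubTreeSeries}_X(t)\times\mathrm{SubTreeSeries}_Y(t)$. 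Since the support of $\mathrm{SubTreeSeries}_L$ equals $\mathrm{SubTreeSet}(L)$, this product vanishes for every $t\notin\mathrm{SubTreeSet}(X)\cap\mathrm{SubTreeSet}(Y)$, so the infinite sum collapses to a finite sum over $t\in\mathrm{SubTreeSet}(X)\cap\mathrm{SubTreeSet}(Y)$.

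Next I would identify the accessible part $Z$ with this intersection, state by state. The product automaton has state set $Q_X\times Q_Y$, and its run function satisfies $(p,p')\in\Delta_{X\odot Y}(t)$ if and only if $p\in\Delta_X(t)$ and $p'\in\Delta_Y(t)$; hence a pair $(p,p')$ is accessible exactly when $L_p\cap L_{p'}\neq\emptyset$. In an ST automaton the run is bottom-up deterministic and each state $p$ is reached precisely by the single subtree it represents, so $L_p$ is a singleton and distinct states carry distinct down languages. Consequently $(p,p')$ is accessible if and only if $p$ and $p'$ represent one and the same subtree $t$, which yields a bijection between $Z$ and $\mathrm{SubTreeSet}(X)\cap\mathrm{SubTreeSet}(Y)$. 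Under this bijection the weight of the accessible state is $\nu((p,p'))=\nu_X(p)\times\nu_Y(p')=\mathrm{SubTreeSeries}_X(t)\times\mathrm{SubTreeSeries}_Y(t)$, and on input $t$ the product automaton reaches exactly that one accessible state, so $\mathbb{P}_{A_X\odot A_Y}(t)$ equals precisely this product; trees outside the intersection reach only $\bot$ and contribute $0$.

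Combining the two reductions, the finite sum over the intersection rewrites termwise as $\sum_{q\in Z}\nu(q)$, which is the claim. The step I expect to be the main obstacle is the second paragraph: making rigorous that the run on each tree lands in exactly one accessible product state and that non-accessible states contribute nothing, i.e.\ that the bijection between $Z$ and $\mathrm{SubTreeSet}(X)\cap\mathrm{SubTreeSet}(Y)$ is weight-preserving. Everything there hinges on the determinism of ST automata (singleton, pairwise-distinct down languages), so I would isolate that property explicitly before performing the termwise identification.
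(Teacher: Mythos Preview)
The paper states this proposition without proof, so there is nothing to compare against; your argument supplies exactly the details the authors omit, and it is correct. The two-step reduction you give (first collapse the sum over $T_\Sigma$ to the finite intersection $\mathrm{SubTreeSet}(X)\cap\mathrm{SubTreeSet}(Y)$ via support considerations, then set up a weight-preserving bijection between that intersection and the accessible product states using bottom-up determinism of ST automata) is the natural route and each step is justified by results already stated in the paper.

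One minor point worth making explicit: the product automaton as defined in the paper carries an extra state~$\bot$, and the paper is vague about whether $\bot$ belongs to the accessible part $Z$ and what $\nu(\bot)$ is. Your sentence ``trees outside the intersection reach only $\bot$ and contribute $0$'' implicitly assumes $\nu(\bot)=0$ (or that $\bot\notin Z$); since $\nu_{X\odot Y}$ is only specified on $Q_X\times Q_Y$, this is the intended reading, but saying so in one line would remove the only loose end.
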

As the size of the accessible part of \( A_X\odot A_Y \) is bounded by  \( O(\min(|A_X|,|A_Y|)) \), we can state the following proposition.

\begin{proposition}
  Let \( X \) and \( Y \) be two finite tree languages, and  \( A_X,  A_Y \) be their associated ST automata.
  Then,  the subtree kernel \( \mathrm{SubTreeKernel}(X,Y) \) can be computed in time\\
   $ O(\min(|A_X|, |A_Y|))$.
\end{proposition}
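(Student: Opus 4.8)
The plan is to obtain this statement as a direct corollary of the three preceding results, reading it as the assembly of the Hadamard-product construction, the kernel formula, and the bound on the accessible part. First I would invoke the immediately preceding proposition, which gives $\mathrm{SubTreeKernel}(X,Y)=\sum_{q\in Z}\nu(q)$, where $Z$ is the accessible part of $A_X\odot A_Y$. This reduces the entire computation to two tasks: building $Z$ together with its root-weight function $\nu$, and then summing $\nu$ over the states of $Z$. Since the automata $A_X$ and $A_Y$ are given, no construction cost for them enters the bound.

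For the first task I would appeal to the proposition on Algorithm~\ref{algo-Intersection2}, which constructs the accessible part of $A_X\odot A_Y$ in time $O(\min(|\mathrm{SubTreeSet}(X)|,|\mathrm{SubTreeSet}(Y)|))$; observe that this algorithm already maintains the product weight $\nu_X(\delta_X(\alpha))\times\nu_Y(p)$ as it proceeds, so $\nu$ is available on $Z$ at no extra asymptotic cost. For the second task, the accessible states of $A_X\odot A_Y$ are in bijection with $\mathrm{SubTreeSet}(X)\cap\mathrm{SubTreeSet}(Y)$, whence $|Z|=|\mathrm{SubTreeSet}(X)\cap\mathrm{SubTreeSet}(Y)|\leq\min(|\mathrm{SubTreeSet}(X)|,|\mathrm{SubTreeSet}(Y)|)$. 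Summing the finitely many root weights therefore costs $O(|Z|)$ additional time, which is absorbed into the same bound, so the kernel is obtained in time $O(\min(|\mathrm{SubTreeSet}(X)|,|\mathrm{SubTreeSet}(Y)|))$.

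It then remains to replace $|\mathrm{SubTreeSet}|$ by $|A|$. By definition the state set of $A_L$ is exactly $\mathrm{SubTreeSet}(L)$, so the number of states, hence $|\mathrm{SubTreeSet}(L)|$, is at most the total size $|A_L|$; consequently $\min(|\mathrm{SubTreeSet}(X)|,|\mathrm{SubTreeSet}(Y)|)\leq\min(|A_X|,|A_Y|)$, which upgrades the bound to the claimed $O(\min(|A_X|,|A_Y|))$. The main obstacle I expect is bookkeeping rather than any deep argument: one must verify that the part produced by Algorithm~\ref{algo-Intersection2} is exactly $Z$, so that no unreachable product state is ever visited (this is precisely what keeps the cost at $\min$ rather than the full product $|A_X|\cdot|A_Y|$), and one must fix a computational model in which the additions and multiplications of root weights are unit-cost. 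Under the usual word-RAM assumption the latter is immediate, and the summation step then clearly stays within budget.
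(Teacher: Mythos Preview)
Your proposal is correct and follows exactly the approach the paper takes: the paper's entire justification is the single sentence preceding the proposition, ``As the size of the accessible part of \(A_X\odot A_Y\) is bounded by \(O(\min(|A_X|,|A_Y|))\), we can state the following proposition,'' and you have simply unpacked this into its three constituents (the kernel-as-sum formula, the Hadamard construction bound, and the final summation over \(Z\)). If anything, your write-up is more careful than the paper's, in particular the explicit passage from \(|\mathrm{SubTreeSet}(\cdot)|\) to \(|A_\cdot|\).
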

Finally, we get our main result.
\begin{theorem}  Let \( X \) and \( Y \) be two finite tree languages. The subtree kernel \( \mathrm{SubTreeKernel}(X,Y) \) can be computed in \( O(|X|+|Y|) \) time and space complexity.
\end{theorem}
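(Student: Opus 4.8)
The plan is to assemble the final bound from the three algorithmic building blocks established above, so that the whole computation reduces to: (i) build the two ST automata, (ii) take their accessible Hadamard product, and (iii) sum the root weights. First I would invoke Theorem~\ref{cor tps cons al}, which constructs the ST automaton \( A_X \) from the forest \( X \) in \( O(|X|) \) time, and likewise \( A_Y \) in \( O(|Y|) \) time; this step also fixes the data structures (the ordered list \( \mathrm{OL}_\delta \) and the bideterministic automaton for \( \delta \)) on which the later steps rely, all within the same space budget.

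Next I would run Algorithm~\ref{algo-Intersection2} to obtain the accessible part \( Z \) of \( A_X\odot A_Y \). By the preceding proposition this costs \( O(\min(|\mathrm{SubTreeSet}(X)|,|\mathrm{SubTreeSet}(Y)|)) \) time, and by the proposition characterising the kernel as a sum of root weights we have \( \mathrm{SubTreeKernel}(X,Y)=\sum_{q\in Z}\nu(q) \); traversing \( Z \) once to accumulate this sum adds only \( O(|Z|)=O(\min(|A_X|,|A_Y|)) \) further work. Thus the total running time is \( O(|X|)+O(|Y|)+O(\min(|A_X|,|A_Y|)) \), and the total space is that needed to store \( A_X \), \( A_Y \) and \( Z \).

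The one inequality that closes the argument — and which I expect to be the only genuinely substantive point — is the claim that the automaton size is dominated by the input size, namely \( |A_X|=O(|X|) \) and \( |A_Y|=O(|Y|) \). Here I would argue that the states of \( A_X \) are exactly the elements of \( \mathrm{SubTreeSet}(X) \), so there are at most \( |X| \) of them since every distinct subtree is rooted at some node of \( X \); and that, because an ST automaton is homogeneous with \( |\delta(f,q_1,\ldots,q_n)|\le 1 \), each state \( q \) carries a single production \( \delta^{-1}(q)=f(q_1,\ldots,q_n) \), whose representation has size proportional to the arity of its root symbol. Summing these arities over distinct subtrees is bounded by the number of child edges present in the forest \( X \), hence by \( |X| \). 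Consequently \( |A_X|=O(|X|) \), and symmetrically \( |A_Y|=O(|Y|) \).

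Plugging these bounds in, the Hadamard-product and summation steps cost \( O(\min(|X|,|Y|)) \), which is absorbed by the construction cost, so both time and space are \( O(|X|+|Y|) \), as required. The main obstacle is therefore not the chaining of the lemmas, which is routine, but making the size comparison \( |A_X|=O(|X|) \) precise — in particular verifying that the compact (shared) representation of subtrees never inflates the transition data beyond the number of nodes actually present in \( X \).
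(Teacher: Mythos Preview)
Your proposal is correct and follows essentially the same route as the paper: build \(A_X\) and \(A_Y\) in \(O(|X|)\) and \(O(|Y|)\) via Theorem~\ref{cor tps cons al}, compute the accessible part of the Hadamard product in \(O(\min(|A_X|,|A_Y|))\), sum the root weights, and close with the bound \(|A_X|\le|X|\), \(|A_Y|\le|Y|\). The paper's justification is in fact terser than yours---it simply asserts the size bound without the explicit counting argument you give---so your write-up is, if anything, more complete on the one point you flagged as substantive.
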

This is due to the fact that the incremental construction of \( A_X \) (resp. \( A_Y) \) from the set \( X \) (resp. \( Y \)) needs \( O(|X|) \) (resp. \( O(|Y|) \)) time and space complexity with  \( A_X\leq |X| \) and \( A_Y\leq |Y| \).

\section{Experiments and results}\label{sec experiments}
This section includes extensive and comparative experiments to evaluate the efficiency  of the ST kernel computation based on RWTA
in terms of the reduction of its representation and the time of execution.
From  an algorithmic point of view,
the available real-world data sets for this task are standard benchmarks for learning on relatively small trees.
They do not cover a wide variety of tree characteristic that is  necessary for a deep algorithm analysis purpose.

To verify our method efficiency,
experiments are conducted on synthetic  unordered tree data sets randomly generated
as in~\cite{AiolliMSM06}\footnote{\small http://www.math.unipd.it/\( \sim \)dasan/pythontreekernels.htm} with various combinations of attributes
including the  alphabet size \( |{\cal F}| \), the maximum of the alphabet arity \( A \), and the maximum tree depth \( D \). For each combination of
a data set parameters \( (|{\cal F}|,A,D) \),
we generate uniformly and randomly a  tree set with cardinal \( 100 \) and   average  size \( N \).

The following table summarizes the dataset parameters used in our experiments where the dataset DS1 (respectively DS2 and DS3) is composed
of five (respectively four and seven) tree sets,
each having a cardinal
equal to 100, obtained by
varying the maximum tree depth \( D \) (respectively the maximum of the alphabet arity \( A \) and both parameters \( (A,D) \)).
\begin{table}[ht]
\caption{Datasets used in our experiments.}
  \centering
  \begin{tabular}{cccccc}
    \toprule
        & Size      & \( |{\cal F}| \) & \( A \)      & \( D \)       & \( N \)        \\
    \midrule
    DS1 & \( 500 \) & \( 2 \)          & \( 5 \)      & \( [5,100] \) & \( [8,1526] \) \\
    \midrule
    DS2 & \( 400 \) & \( 2 \)          & \( [5,20] \) & \( 5 \)       & \( [8,1196] \) \\
    \midrule
    DS3 & \( 700 \) & \( 2 \)          & \( [2,15] \) & \( [5,100] \) & \( [5,478] \)  \\
    \bottomrule
  \end{tabular}

\end{table}

All the algorithms are implemented in C++11 and Bison++ parser.
Compilation and
assembly were made in gnu-gcc. All experiments were performed on a laptop with Intel Core i5--4770K (3.5GHz) CPU and 8Gb RAM\@.
Source code 
can be found \href{https://github.com/ouardifaissal/Subtree-kernel-computation-using-RWTA}{here~\cite{Ouardi2021}}.

For each combination of dataset parameters \( (|{\cal F}|,A, D) \), we evaluated the ST kernel of all 4950 possible tree pairs. Then,
we derive the average computation time and the average number of states of the constructed RWTA on all the tree pairs.

The obtained results of the conducted experiments, from Figures~\ref{fig1},~\ref{fig2} and~\ref{fig3}, show clearly that our approach
is linear, asymptoticly  logarithmic,  w.r.t. the sum of trees size  and more efficient than the existing methods for a
wide
variety of trees. These results can be explained by the fact that our approach is output-sensitive.
In addition, it produces a
compact representation of the ST kernel  that can be used  efficiently in  incremental learning algorithms.
\vspace{-0.5cm}
\section{Conclusion and Perspectives}\label{sec conclusion}
\vspace{-0.2cm}
In this paper, we defined new weighted tree automata.
Once these definitions stated, we made use of these new structures in order to compute the subtree kernel of two finite tree languages efficiently.\\
Our approach can be applied to compute other  distance-based tree kernel like the SST kernel, the subpath tree kernel, the topological tree kernel and  the Gappy tree kernel. 
The next step of our work is to apply our constructions in order to efficiently compute these kernels using a unified framework based on weighted tree automata. However, this application is not so direct since it seems that the SST series may not be sequentializable w.r.t. a linear space complexity.
Hence we have to find different techniques, like extension of lookahead determinism~\cite{HW08} for example.
\begin{figure}[H]
  \begin{tabular}{lc}
  (a) &  \includegraphics[width=6.8cm]{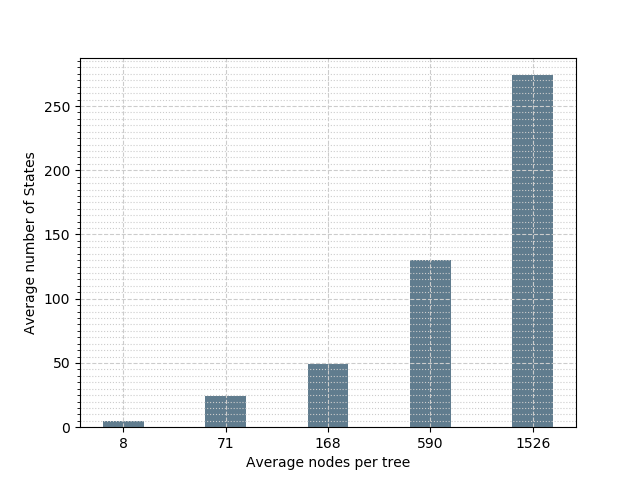} \\
  (b) & \includegraphics[width=6.8cm]{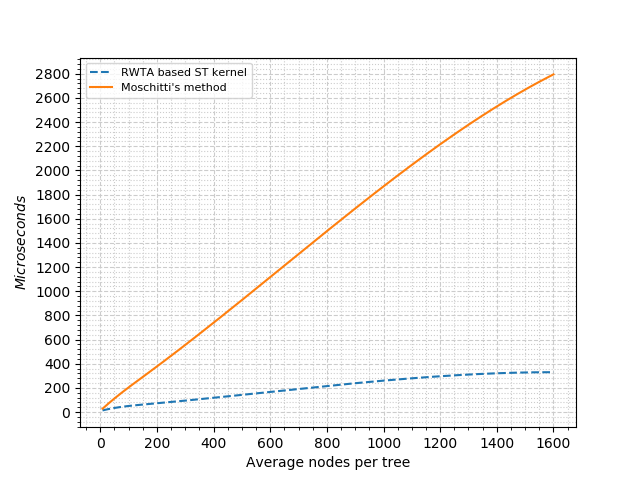} 
  \end{tabular}
  \caption{(a) The reduction ratio of a tree representation using RWTA approach for data set DS1.
    (b) The average  computation time of the ST kernel when varying the tree depth for data set DS1.}%
  \label{fig1}
\end{figure}

\vspace{-1cm}
\begin{figure}[H]
  \begin{tabular}{lc}
  (a) & \includegraphics[width=6.8cm]{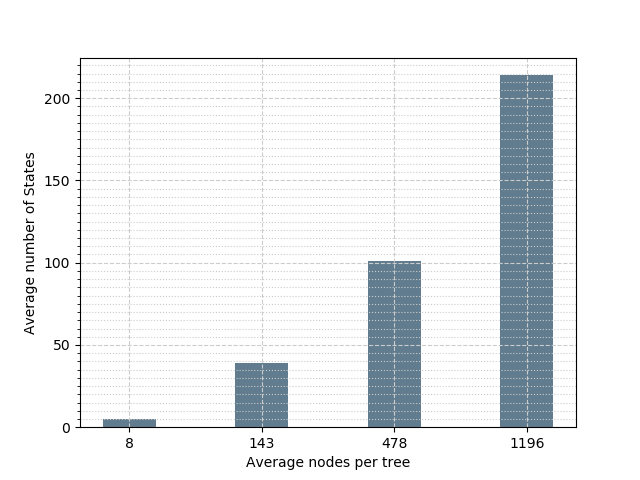} \\   (b) & \includegraphics[width=6.8cm]{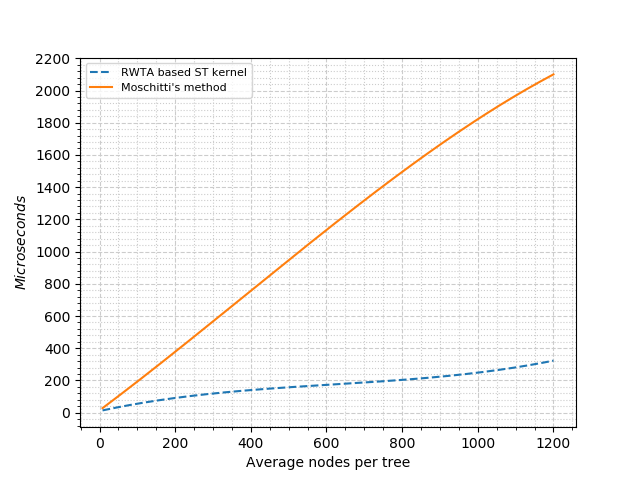}
  \end{tabular}
  \caption{(a) The reduction ratio of a tree representation using RWTA approach for data set DS2.
    (b) The average  computation time of the ST kernel  when varying the alphabet arity for data set DS2.}%
  \label{fig2}
\end{figure}

\begin{figure}[H]
  \begin{tabular}{lc}
    (a) & \includegraphics[width=7cm]{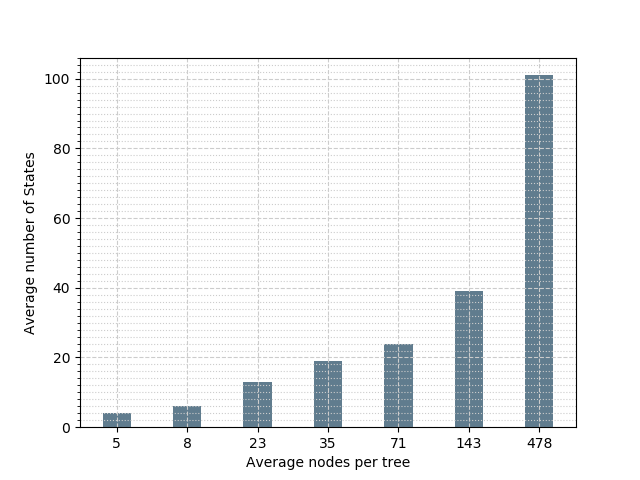}  \\ (b) & \includegraphics[width=7cm]{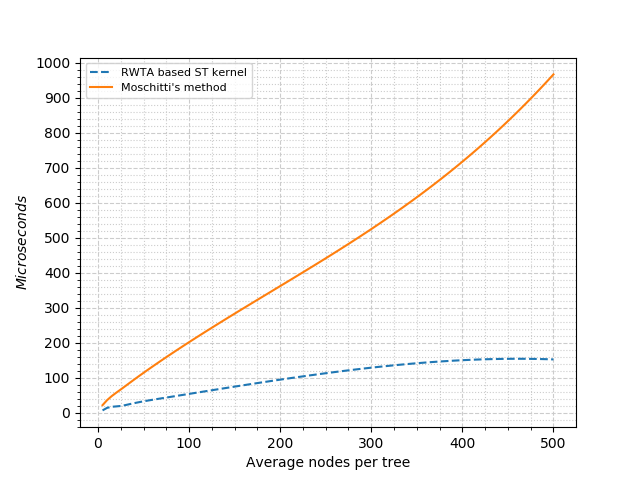} 
  \end{tabular}
  \caption{(a) The reduction ratio of a tree representation using RWTA approach for data set DS3.
    (b) The average  computation time of the ST kernel  when varying the tree depth and the alphabet arity for data set DS3.}%
  \label{fig3}
\end{figure}
\bibliographystyle{elsarticle-num}
\bibliography{RWTA_STKERNEL}

\begin{thebibliography}{10}
\expandafter\ifx\csname url\endcsname\relax
  \def\url#1{\texttt{#1}}\fi
\expandafter\ifx\csname urlprefix\endcsname\relax\def\urlprefix{URL }\fi
\expandafter\ifx\csname href\endcsname\relax
  \def\href#1#2{#2} \def\path#1{#1}\fi

\bibitem{aizerman65}
M.~A. Aizerman, E.~M. Braverman, L.~I. Rozono\`er,
  \href{http://mi.mathnet.ru/at11677}{Theoretical foundation of potential
  functions method in pattern recognition}, Avtomat. i Telemekh. 25~(6) (1964)
  917--936.
\newline\urlprefix\url{http://mi.mathnet.ru/at11677}

\bibitem{cortes95}
C.~Cortes, V.~Vapnik,
  \href{http://dx.doi.org/10.1007/BF00994018}{Support-vector networks}, Machine
  Learning 20~(3) (1995) 273--297.
\newblock \href {https://doi.org/10.1007/BF00994018}
  {\path{doi:10.1007/BF00994018}}.
\newline\urlprefix\url{http://dx.doi.org/10.1007/BF00994018}

\bibitem{zelenkoAR03}
D.~Zelenko, C.~Aone, A.~Richardella,
  \href{http://www.jmlr.org/papers/v3/zelenko03a.html}{Kernel methods for
  relation extraction}, Journal of Machine Learning Research 3 (2003)
  1083--1106.
\newline\urlprefix\url{http://www.jmlr.org/papers/v3/zelenko03a.html}

\bibitem{icann97a}
B.~Sch{\"{o}}lkopf, A.~J. Smola, K.~M{\"{u}}ller,
  \href{http://dx.doi.org/10.1007/BFb0020217}{Kernel principal component
  analysis}, in: Artificial Neural Networks - {ICANN} '97, 7th International
  Conference, Lausanne, Switzerland, October 8-10, 1997, Proceedings, 1997, pp.
  583--588.
\newblock \href {https://doi.org/10.1007/BFb0020217}
  {\path{doi:10.1007/BFb0020217}}.
\newline\urlprefix\url{http://dx.doi.org/10.1007/BFb0020217}

\bibitem{CollinsDuffy}
M.~Collins, N.~Duffy, \href{http://www.aclweb.org/anthology/P02-1034.pdf}{New
  ranking algorithms for parsing and tagging: Kernels over discrete structures,
  and the voted perceptron}, in: Proceedings of the 40th Annual Meeting of the
  Association for Computational Linguistics, July 6-12, 2002, Philadelphia, PA,
  {USA.}, 2002, pp. 263--270.
\newline\urlprefix\url{http://www.aclweb.org/anthology/P02-1034.pdf}

\bibitem{culottaS04}
A.~Culotta, J.~S. Sorensen,
  \href{http://acl.ldc.upenn.edu/acl2004/main/pdf/244_pdf_2-col.pdf}{Dependency
  tree kernels for relation extraction}, in: Proceedings of the 42nd Annual
  Meeting of the Association for Computational Linguistics, 21-26 July, 2004,
  Barcelona, Spain., 2004, pp. 423--429.
\newline\urlprefix\url{http://acl.ldc.upenn.edu/acl2004/main/pdf/244_pdf_2-col.pdf}

\bibitem{cumbyR03}
C.~M. Cumby, D.~Roth,
  \href{http://www.aaai.org/Library/ICML/2003/icml03-017.php}{On kernel methods
  for relational learning}, in: Machine Learning, Proceedings of the Twentieth
  International Conference {(ICML} 2003), August 21-24, 2003, Washington, DC,
  {USA}, 2003, pp. 107--114.
\newline\urlprefix\url{http://www.aaai.org/Library/ICML/2003/icml03-017.php}

\bibitem{moschitti04}
A.~Moschitti,
  \href{http://acl.ldc.upenn.edu/acl2004/main/pdf/388_pdf_2-col.pdf}{A study on
  convolution kernels for shallow statistic parsing}, in: Proceedings of the
  42nd Annual Meeting of the Association for Computational Linguistics, 21-26
  July, 2004, Barcelona, Spain., 2004, pp. 335--342.
\newline\urlprefix\url{http://acl.ldc.upenn.edu/acl2004/main/pdf/388_pdf_2-col.pdf}

\bibitem{filice-etal-2015-structural}
S.~Filice, G.~Da~San~Martino, A.~Moschitti,
  \href{https://www.aclweb.org/anthology/P15-1097}{Structural representations
  for learning relations between pairs of texts}, in: Proceedings of the 53rd
  Annual Meeting of the Association for Computational Linguistics and the 7th
  International Joint Conference on Natural Language Processing (Volume 1: Long
  Papers), Association for Computational Linguistics, Beijing, China, 2015, pp.
  1003--1013.
\newblock \href {https://doi.org/10.3115/v1/P15-1097}
  {\path{doi:10.3115/v1/P15-1097}}.
\newline\urlprefix\url{https://www.aclweb.org/anthology/P15-1097}

\bibitem{wachsmuth-etal-2017-computational}
H.~Wachsmuth, N.~Naderi, Y.~Hou, Y.~Bilu, V.~Prabhakaran, T.~A. Thijm,
  G.~Hirst, B.~Stein,
  \href{https://www.aclweb.org/anthology/E17-1017}{Computational argumentation
  quality assessment in natural language}, in: Proceedings of the 15th
  Conference of the {E}uropean Chapter of the Association for Computational
  Linguistics: Volume 1, Long Papers, Association for Computational
  Linguistics, Valencia, Spain, 2017, pp. 176--187.
\newline\urlprefix\url{https://www.aclweb.org/anthology/E17-1017}

\bibitem{Haussler}
D.~Haussler, Convolution kernels on discrete structures, Tech. rep., Technical
  report, Department of Computer Science, University of California~… (1999).

\bibitem{Vishwanathan}
S.~Vishwanathan, A.~J. Smola., Fast kernels on strings and trees, Advances on
  Neural Information Proccessing Systems, (2002) 14.

\bibitem{moschitti2006}
A.~Moschitti, Efficient convolution kernels for dependency and constituent
  syntactic trees, in: {ECML}, Vol. 4212 of Lecture Notes in Computer Science,
  Springer, 2006, pp. 318--329.

\bibitem{Kashima2002KernelsFS}
H.~Kashima, T.~Koyanagi, Kernels for semi-structured data, in: {ICML}, Morgan
  Kaufmann, 2002, pp. 291--298.

\bibitem{Kimura}
D.~Kimura, T.~Kuboyama, T.~Shibuya, H.~Kashima, A subpath kernel for rooted
  unordered trees, in: {PAKDD} {(1)}, Vol. 6634 of Lecture Notes in Computer
  Science, Springer, 2011, pp. 62--74.

\bibitem{dasanmartino}
D.~S.~M. G., Kernel methods for tree structured data, PhD thesis (2009).

\bibitem{moschitti06}
A.~Moschitti, \href{http://acl.ldc.upenn.edu/E/E06/E06-1015.pdf}{Making tree
  kernels practical for natural language learning}, in: {EACL} 2006, 11st
  Conference of the European Chapter of the Association for Computational
  Linguistics, Proceedings of the Conference, April 3-7, 2006, Trento, Italy,
  2006, pp. 113--120.
\newline\urlprefix\url{http://acl.ldc.upenn.edu/E/E06/E06-1015.pdf}

\bibitem{Azais20}
R.~Azaïs, F.~Ingels, \href{https://jmlr.org/papers/v21/19-290.html}{The weight
  function in the subtree kernel is decisive}, Journal of Machine Learning
  Research 21 (2020) 1--36.
\newline\urlprefix\url{https://jmlr.org/papers/v21/19-290.html}

\bibitem{4221313}
F.~Aiolli, G.~Da~San~Martino, A.~Sperduti, A.~Moschitti, Efficient kernel-based
  learning for trees, in: 2007 IEEE Symposium on Computational Intelligence and
  Data Mining, 2007, pp. 308--315.
\newblock \href {https://doi.org/10.1109/CIDM.2007.368889}
  {\path{doi:10.1109/CIDM.2007.368889}}.

\bibitem{DrosteG07}
M.~Droste, P.~Gastin,
  \href{http://dx.doi.org/10.1016/j.tcs.2007.02.055}{Weighted automata and
  weighted logics}, Theor. Comput. Sci. 380~(1-2) (2007) 69--86.
\newblock \href {https://doi.org/10.1016/j.tcs.2007.02.055}
  {\path{doi:10.1016/j.tcs.2007.02.055}}.
\newline\urlprefix\url{http://dx.doi.org/10.1016/j.tcs.2007.02.055}

\bibitem{DrosteV06}
M.~Droste, H.~Vogler,
  \href{http://dx.doi.org/10.1016/j.tcs.2006.08.025}{Weighted tree automata and
  weighted logics}, Theor. Comput. Sci. 366~(3) (2006) 228--247.
\newblock \href {https://doi.org/10.1016/j.tcs.2006.08.025}
  {\path{doi:10.1016/j.tcs.2006.08.025}}.
\newline\urlprefix\url{http://dx.doi.org/10.1016/j.tcs.2006.08.025}

\bibitem{AiolliMSM06}
F.~Aiolli, G.~D.~S. Martino, A.~Sperduti, An efficient topological
  distance-based tree kernel, IEEE Transactions on Neural Networks and Learning
  Systems 26 (2015) 1115--1120.

\bibitem{Ouardi2021}
F.~Ouardi, Subtree kernel computation using rwta,
  \url{https://github.com/ouardifaissal/Subtree-kernel-computation-using-RWTA}
  (2021).

\bibitem{HW08}
Y.~Han, D.~Wood, Generalizations of 1-deterministic regular languages,
  Information and Computation 206~(9-10) (2008) 1117--1125.
\newblock \href {https://doi.org/10.1016/j.ic.2008.03.013}
  {\path{doi:10.1016/j.ic.2008.03.013}}.

\end{thebibliography}
\end{document}